\documentclass{article}
\usepackage{spconf,amsmath,graphicx}
\usepackage{amsmath,graphicx}
\usepackage{amsthm}
\usepackage{cite}
\usepackage{graphicx}
\usepackage{amssymb}
\usepackage{amsfonts}
\usepackage{multirow}
\usepackage{mathrsfs}
\usepackage{color}
\usepackage{algorithm}
\usepackage{algorithmic}
\usepackage{multirow}
\usepackage{amsmath}
\usepackage{graphics}
\usepackage{graphicx}
\usepackage{epsfig}
\usepackage{cases}
\usepackage{amsmath,bm}
\usepackage{wasysym}
\usepackage{graphicx}
\usepackage{subfigure}
\usepackage{hyperref}
\hypersetup{ hidelinks = true, }

\newtheorem{mytheorem}{Theorem}
\newtheorem{mylemma}{Lemma}
\newtheorem{mydef}{Definition}

\newtheorem{myproperty}{Property}

\title{CONVERGENCE ANALYSIS OF THE \\INFORMATION MATRIX in GAUSSIAN BELIEF  PROPAGATION}

\name{Jian Du$^{\dagger}$\thanks{This work is partially supported by NSF grant \# CCF1513936.},  Shaodan Ma$^{\star}$,  Yik-Chung Wu$^{\ddag}$,
 Soummya Kar$^{\dagger}$ and  Jos{\'e} M. F. Moura$^{\dagger}$ }
\address{Electrical and Computer Engineering, Carnegie Mellon University$^{\dagger}$, Pittsburgh, PA \\
    Electrical and Computer Engineering, University of Macau$^{\star}$, Macau\\
    Electrical and Electronic Engineering, The University of Hong Kong$^{\ddag}$, Hong Kong}
\begin{document}
\newcommand*{\QEDA}{\hfill\ensuremath{\blacksquare}}
\def\N{{\mathcal{N}}}
\def\B{{\mathcal{B}}}
\def\I{{\textbf{I}}}
\def\diag{{\textrm{diag}}}
\def\i {{ -i}}

%
\maketitle
\begin{abstract}
Gaussian belief propagation (BP) has been widely used for distributed
estimation in large-scale networks such as the
smart grid,  communication networks, and social networks, where local measurements/observations are scattered over a wide geographical area.
However, the convergence of Gaussian BP is still an open issue.
In this paper, we consider the convergence  of Gaussian BP, focusing in particular on the convergence of the information matrix.
We show analytically that
 the exchanged message information matrix  converges  for arbitrary positive semidefinite initial value, and its distance to the unique  positive definite limit matrix decreases exponentially fast.
\end{abstract}
\begin{keywords}
graphical model,  belief propagation,  large-scale networks,  Markov random field.
\end{keywords}
\section{Introduction}\label{Section 1}
In large-scale linear parameter estimation with Gaussian measurements,
Gaussian Belief Propagation (BP)
 \cite{DiagnalDominant} provides an efficient distributed way to compute the marginal distribution of the unknown variables, and it has been adopted in a variety of topics ranging from  distributed power state estimation \cite{A1} in smart grid, distributed beamforming \cite{A2} and synchronization  \cite{JianClock,cfo} in wireless communication networks,  fast solver for system of linear equations \cite{A4}, distributed rate control in ad-hoc networks \cite{A5},  factor analyzer network \cite{A6}, sparse Bayesian learning \cite{A7}, 
 to peer-to-peer rating in social networks \cite{A9}.
 It has been shown that Gaussian BP computes the optimal centralized estimator if it converges \cite{MRFtoFG}.

Although with great empirical success, the  major challenge that hinders Gaussian BP to realize its full potential is the lack of theoretical guarantees of convergence in loopy networks.
Sufficient convergence conditions for Gaussian BP have been developed in \cite{DiagnalDominant,WalkSum1,minsum09,Suqinliang}  when the underlying Gaussian distribution is expressed in terms of pairwise connections between scalar variables (also known as Markov random field (MRF)).
These works focus on the convergence analysis of Gaussian BP for computing the marginal distribution of a joint distribution with pairwise factors.
However,  the iterative equations for Gaussian BP on MRFs are different from that for distributed estimation problems such as in \cite{A1,A2,JianClock,cfo,A4,A5,A6,A7,A8},  where high order factors (non-pairwise) and vector-valued variables  are involved.
Therefore, these existing conditions and analysis methods are not applicable to distributed estimation problems.
In this paper, we study the convergence analysis of Gaussian BP for distributed parameter estimation focusing on the convergence of message information matrix.
We show analytically that,
with arbitrary positive
semidefinite  matrix  initialization,
the  message information matrix being exchanged among nodes  converges  and its distance to the unique  positive definite  limit matrix decreases exponentially.

Note that distributed estimation based on the consensus$+$\\innovations philosophy   proposed in \cite{Kar-SPS, Kar-SIAM} (see also the related family of diffusion algorithms~\cite{Sayed10Diffusion}) converges to the optimal centralized estimator under the assumption of global observability
of the (aggregate) sensing model and connectivity of the inter-agent communication network.
In particular, these algorithms allow the communication or message exchange network to be different from the physical coupling network and the former could be arbitrary with cycles (as long as it is connected).
The results in \cite{Kar-SPS, Kar-SIAM} imply that the unknown variables $\textbf x$  can be reconstructed completely at each node in the network.
For large-scale networks with high dimensional $\textbf x$, it may be impractical to reconstruct $\textbf x$ at every node.
In \cite[section 3.4]{Kar-thesis}, the author developed approaches to address this problem, where each node can reconstruct a set of unknown variables that should be larger than the set of variables that influence its local measurement.
This paper studies a different distributed estimation problem when each node estimates only its own unknown variables under pairwise independence condition of the unknown variables; this leads to lower dimensional data exchanges between neighbors.

\section{Computation Model}\label{hybrid}
Consider a general connected network
of $M$  nodes, with $\mathcal{V}=\{1,\ldots, M\}$ denoting the set of nodes,
and $\mathcal{E}_{\textrm{Net}} \subset \mathcal{V} \times  \mathcal{V}$ as the set of all undirect communication links in the network, i.e., if $i$ and $j$ are within the communication range, $(i, j) \in \mathcal{E}_{\textrm{Net}}$.
At every node $n \in \mathcal{V}$,
the local observations are in the form of
$\textbf{y}_n = \sum_{i\in  n\cup\mathcal{I}(n)}
\textbf{A}_{n,i}\textbf{x}_i + \textbf{z}_n,
$
where
$\mathcal{I}(n)$ denotes the set of direct neighbors of node $n$ (i.e., all nodes $i$ with $(n,i) \in \mathcal{E}_{\textrm{Net}}$),
$\textbf{A}_{n,i}$ is a known coefficient matrix with full column rank,
$\textbf{x}_i$ is the local unknown parameter at node $i$ with  dimension $N_i \times 1$, and with the prior distribution $p(\textbf{x}_i)\sim \mathcal{N}(\textbf{x}_i|\textbf{0},\textbf{W}_{i})$,
and $\textbf{z}_n$ is the additive noise with distribution $\textbf{z}_n\sim \mathcal{N}(\textbf{z}_n|\textbf{0},\textbf{R}_n)$.
It is assumed that
$p(\textbf{x}_i, \textbf{x}_j)=p(\textbf{x}_i)p(\textbf{x}_j)$
and
$p(\textbf{z}_i,\textbf{z}_j)
=p(\textbf{z}_i)p(\textbf{z}_j)$ for $i\neq j$.
The goal is to estimate $\textbf{x}_i$, based on $\textbf{y}_n$, $p(\textbf{x}_i)$ and $p(\textbf{z}_n)$.

The Gaussian
BP algorithm can be derived  over the corresponding factor graph to compute the estimate of $\textbf x_n$ for all $n\in \mathcal V$ \cite{journalversion}. It
involves two kinds of messages:
One is the message
from a variable node $\textbf x_j$ to its neighboring factor node $f_n$, defined as
\begin{equation} \label{BPv2f1}
m^{(\ell)}_{j \to f_n}(\textbf x_j)
= p(\textbf x_j)
\prod_{f_k\in \B(j)\setminus f_n}m^{(\ell-1)}_{f_k\to j}(\textbf x_j),
\end{equation}
where $\B(j)$ denotes the set of neighbouring factor nodes  of $\textbf x_j$,
and $m^{(\ell-1)}_{f_k\to j}(\textbf x_j)$ is the   message  from $f_k$ to $\textbf x_j$ at time $l-1$.
The second type of
message is from a factor node $f_n$ to a neighboring variable node $\textbf{x}_i$,    defined as
\begin{equation}\label{BPf2v1}
m^{(\ell)}_{f_n \to i}(\textbf{x}_i)
= \!\! \int\!\!\! \cdots \!\!\!\int
\!\!f_n \times\!\!\!\!\!\!\!
\prod_{j\in\B(f_n)\setminus i}
\!\!\!\!\!\!
m^{ (\ell)}_{j \to f_n}(\textbf x_j)
\,\mathrm{d}\{\textbf x_j\}_{j\in\B(f_n)\setminus i},
\end{equation}
where $\B(f_n)$ denotes the set of neighboring variable nodes of $f_n$.
The process iterates between equations (\ref{BPv2f1}) and (\ref{BPf2v1}).
At each iteration $l$, the approximate marginal distribution, also named belief, on $\textbf{x}_i$ is computed locally at $\textbf{x}_i$ as
\begin{equation} \label{BPbelief}
b_{\textrm{BP}}^{(\ell)}(\textbf{x}_i)
 = p(\textbf{x}_i) \prod_{ f_n\in \B(i)} m^{(\ell)}_{ f_n \to i}(\textbf{x}_i).
\end{equation}


It can be  shown \cite{journalversion} that the general expression for the message from variable node to factor node  is

\begin{equation} \label{BPvs2f1}
m^{(\ell)}_{j \to f_n}(\textbf x_j) \propto
\exp
\big\{-\frac{1}{2}
||\textbf x_j- \textbf{v}^{(\ell)}_{j\to f_n}||
_{\textbf{C}^{(\ell)}_{j\to f_n}}
\big\},
\end{equation}
where $\textbf{C}_{j\to f_n}^{(\ell)}$ and $ \textbf{v}_{j\to f_n}^{(\ell)}$ are the message covariance matrix and mean vector  received at variable node $j$ at the $l$-$\textrm{th}$ iteration, with
\begin{equation} \label{v2fV}
\big[\textbf{C}^{(\ell)}_{j \to f_n}\big]^{-1}
= \textbf{W}_j^{-1} +
\sum_{f_k\in\B(j)\setminus f_n}
\big[\textbf{C}_{f_k\to j}^{(\ell-1)}\big]^{-1}.
\end{equation}
Furthermore,
the message  from factor node to variable node is given by \cite{journalversion}
\begin{equation} \label{f2v}
m^{(\ell)}_{f_n \to i}(\textbf{x}_i)\propto
\exp
\big\{-\frac{1}{2}
||\textbf{x}_i- \textbf{v}^{(\ell)}_{f_n\to i}||
_{\textbf{C}^{(\ell)}_{f_n\to i}}
\big\},
\end{equation}
where $\textbf{C}_{f_k\to j}^{(\ell-1)}$ and $ \textbf{v}_{f_k\to j}^{(\ell-1)}$ are the message covariance matrix and mean vector  received at variable node $j$ at the $l-1$ iteration
with
\begin{equation}\label{Cov}
\begin{split}
[\textbf{C}^{(\ell)}_{f_n\to i} ]^{-1}
=\!
\textbf{A}_{n,i}^T
\big[ \textbf{R}_n
+\!\!\!\!\!\!
\sum_{j\in\B(f_n)\setminus i} \!\!\!\!\! \textbf{A}_{n,j}\textbf{C}^{(\ell)}_{j\to f_n}\textbf{A}_{n,j}^T \big]^{-1}\!
\textbf{A}_{n,i}.
\end{split}
\end{equation}

The following lemma shown in \cite{journalversion} indicates that   setting the initial message covariances $[\textbf{C}_{f_n\to i}^{(0)}]^{-1}\succeq \textbf{0}$  for all $n, i\in \mathcal{V}$
guarantees  $[\textbf{C}^{(\ell)}_{j\to f_n}]^{-1}\succ \textbf{0}$ for $l \geq 1$.

\begin{mylemma}\label{pdlemma}
Let the initial messages at factor node $f_k$ be in Gaussian function forms with  covariance $[\textbf{C}_{f_k\to j}^{(0)}]^{-1} \succeq \textbf{0}$ for all $k \in \mathcal{V}$ and $j \in \mathcal{B}(f_k)$.
Then
$[\textbf{C}_{j\to f_n}^{(\ell)}]^{-1}\succ \textbf{0}$
 and $[\textbf{C}^{(\ell)}_{f_k \to j}]^{-1} \succ \textbf{0}$
 for all $l\geq 1$ with $j \in \mathcal{V}$ and
$f_n, f_k \in \mathcal{B}(j)$.
 Furthermore,
in this case,
all the messages $m^{(\ell)}_{j \to f_n}(\textbf x_j)$ and
$m^{(\ell)}_{f_k \to j}(\textbf{x}_i)$ exist and are in Gaussian form.
\end{mylemma}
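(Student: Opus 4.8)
The plan is to prove both positive-definiteness claims simultaneously by induction on the iteration index $\ell$, playing the two recursions (\ref{v2fV}) and (\ref{Cov}) off against each other in alternation. Three structural facts drive everything: the prior precision $\textbf{W}_j^{-1}$ is positive definite because $p(\textbf{x}_j)$ is a proper Gaussian; the noise covariance $\textbf{R}_n$ is positive definite; and each coefficient matrix $\textbf{A}_{n,i}$ has full column rank. The induction hypothesis at level $\ell$ is that $[\textbf{C}^{(\ell-1)}_{f_k\to j}]^{-1}\succ\textbf{0}$ for every factor-to-variable message, and I would show this propagates first to $[\textbf{C}^{(\ell)}_{j\to f_n}]^{-1}\succ\textbf{0}$ and then to $[\textbf{C}^{(\ell)}_{f_n\to i}]^{-1}\succ\textbf{0}$.

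For the variable-to-factor half-step I would read off (\ref{v2fV}), namely $[\textbf{C}^{(\ell)}_{j\to f_n}]^{-1}=\textbf{W}_j^{-1}+\sum_{f_k\in\B(j)\setminus f_n}[\textbf{C}_{f_k\to j}^{(\ell-1)}]^{-1}$. At $\ell=1$ the summands are only guaranteed positive semidefinite by the assumed initialization, but the strictly positive-definite term $\textbf{W}_j^{-1}$ makes the whole sum positive definite; for $\ell\ge 2$ the summands are themselves positive definite by the induction hypothesis, so the conclusion is immediate. This is precisely why a positive semidefinite initialization at $\ell=0$ suffices: the prior supplies the missing definiteness at the very first half-step.

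For the factor-to-variable half-step I would use (\ref{Cov}). Having just established $[\textbf{C}^{(\ell)}_{j\to f_n}]^{-1}\succ\textbf{0}$, the covariance $\textbf{C}^{(\ell)}_{j\to f_n}$ is a genuine positive-definite matrix, so each $\textbf{A}_{n,j}\textbf{C}^{(\ell)}_{j\to f_n}\textbf{A}_{n,j}^T$ is positive semidefinite; adding $\textbf{R}_n\succ\textbf{0}$ makes the bracket positive definite, and hence so is its inverse $\textbf{M}:=[\textbf{R}_n+\sum_{j}\textbf{A}_{n,j}\textbf{C}^{(\ell)}_{j\to f_n}\textbf{A}_{n,j}^T]^{-1}$. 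It then remains to show that the congruence $\textbf{A}_{n,i}^T\textbf{M}\,\textbf{A}_{n,i}$ is positive definite, and I expect this to be the one genuinely load-bearing step. For any $\textbf{u}\neq\textbf{0}$ one has $\textbf{u}^T\textbf{A}_{n,i}^T\textbf{M}\,\textbf{A}_{n,i}\textbf{u}=(\textbf{A}_{n,i}\textbf{u})^T\textbf{M}(\textbf{A}_{n,i}\textbf{u})$, and full column rank of $\textbf{A}_{n,i}$ forces $\textbf{A}_{n,i}\textbf{u}\neq\textbf{0}$, so positive-definiteness of $\textbf{M}$ makes the quadratic form strictly positive. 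Without full column rank the map would only be positive semidefinite, so this hypothesis is exactly what upgrades semidefiniteness to definiteness across a factor node.

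Finally, existence and Gaussian form follow as a corollary. Once every exchanged precision matrix is positive definite, each message in (\ref{BPvs2f1}) and (\ref{f2v}) is a proper, normalizable Gaussian, and the integral (\ref{BPf2v1}) is the marginalization of a product of proper Gaussians against the Gaussian factor $f_n$; such an integral converges and returns a Gaussian, closing the induction and delivering all three claims at once. The only point needing care beyond this bookkeeping is confirming that the marginalization underlying (\ref{Cov}) is well defined at each step, which is again secured by the positive-definiteness of the bracket established above.
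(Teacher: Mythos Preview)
Your argument is correct and is precisely the expected proof: alternate induction through (\ref{v2fV}) and (\ref{Cov}), with $\textbf{W}_j^{-1}\succ\textbf{0}$ supplying definiteness at the variable-to-factor step, $\textbf{R}_n\succ\textbf{0}$ ensuring the bracket in (\ref{Cov}) is invertible, and the full column rank of $\textbf{A}_{n,i}$ upgrading the congruence to strict positive definiteness. Note, however, that the present paper does not actually prove Lemma~\ref{pdlemma}; it only states it and cites \cite{journalversion} for the proof, so there is no in-paper argument to compare against. Your proposal stands on its own as a complete and standard verification of the claim.
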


For this factor graph based approach, according to the message updating procedure  (\ref{BPvs2f1}) and (\ref{f2v}),  message exchange is only needed between neighboring nodes.
For example, the messages transmitted from node $n$ to its neighboring node $i$ are   $m_{f_n\to i}^{(\ell)}(\textbf{x}_i)$ and $m_{n\to f_i}^{(\ell)}(\textbf x_n)$.
Thus, the message passing scheme given in (\ref{BPv2f1}) and (\ref{BPf2v1}) automatically conforms with the network topology.
Furthermore, if the messages $m_{j\to f_n}^{(\ell)}(\textbf x_j)$
and $m_{f_n\to i}^{(\ell)}(\textbf{x}_i)$ exist for all $l$
(which can be achieved using Lemma \ref{pdlemma}),
the messages  are Gaussian, therefore only the corresponding mean vectors and information matrices (inverse of covariance matrices) are needed to be exchanged.

Finally, if the BP messages exist, according to the definition of belief in (\ref{BPbelief}),
$b_{\textrm{BP}}^{(\ell)}(\textbf{x}_i)$ at  iteration  $l$ is computed as \cite{journalversion}
\begin{equation} \label{belief2}
\mathbf{b}_{\textrm{BP}}^{(\ell)}(\textbf{x}_i)
=p(\textbf{x}_i)
\!\!\prod_{f_n\in\mathcal B(i)} m_{f_n\to i}^{(\ell)}(\textbf{x}_i)
\propto  \mathcal{N}\big(\textbf{x}_i|
\boldsymbol{\mu}_i^{(\ell)}, \textbf{P}_i^{(\ell)}\big),
\end{equation}
with $
\textbf{P}_i^{(\ell)} =
\big[\textbf{W}_i^{-1}
+\sum_{f_n\in\B(i)} \big[
\textbf{C}_{f_n\to i}^{(\ell)}\big]^{-1}\big]^{-1},
$
and $
\boldsymbol{\mu}_i^{(\ell)}=\textbf{P}_i^{(\ell)}\big[
\sum_{f_n\in\B(i)}
\big[\textbf{C}_{f_n\to i}^{(\ell)}\big]^{-1}\textbf{v}^{(\ell)}_{f_n\to i}\big].
$
The iterative computation terminates when message (\ref{BPvs2f1}) or message (\ref{f2v}) converges to a fixed value or the maximum number of iterations is reached.

\section{Convergence of  Information Matrices}\label{analysis}
The challenge of deploying the BP algorithm for large-scale networks is determining whether it will converge.
In particular, it
is generally known that if the factor graph contains cycles, the BP algorithm may
diverge.
Thus, determining  convergence conditions for the BP algorithm is very important.
Sufficient conditions for the convergence of Gaussian BP with scalar variable in loopy graphs are available in \cite{DiagnalDominant, WalkSum1,Suqinliang}.
 However,  they are derived based on  pairwise graphs with local functions that only involve two variables.
This is in sharp contrast to the model considered in this paper, where the
$f_n$   involves high-order interactions between vector variables, and
thus the convergence results in \cite{DiagnalDominant, WalkSum1, Suqinliang} cannot be applied to the factor graph based vector-form Gaussian BP.

Due to the recursively updating property of  $m_{j\to f_n}^{(\ell)}(\textbf x_j)$ and $m_{f_n\to i}^{(\ell)}(\textbf{x}_i)$ in (\ref{BPvs2f1}) and (\ref{f2v}), the message evolution can be simplified by combining these two kinds of messages into one.
By substituting
 $ \big[\textbf{C}^{(\ell)}_{j \to f_n}\big]^{-1}$ in (\ref{v2fV}) into   (\ref{Cov}), the updating of the message covariance matrix inverse, named message information matrix in the following, can be denoted as
\begin{eqnarray}\label{CovFunc}
\begin{split}
\big[\textbf{C}_{f_n\to i}^{(\ell)}\big]^{-1}
=&
\textbf{A}_{n,i}^T \big[\textbf{R}_n
+ \sum_{j\in\B(f_n)\setminus i} \textbf{A}_{n,j}
\big[
 \textbf{W}_{j}^{-1} \\
&+
\sum_{f_k\in\B(j)\setminus f_n}
\big[\textbf{C}_{f_k\to j}^{(\ell-1)}\big]^{-1}
\big]^{-1}
\textbf{A}_{n,j}^T \big]^{-1}
\textbf{A}_{n,i}\nonumber\\
\triangleq &
\mathcal{F}_{n\to i}
\big(\{
\big[\textbf{C}_{f_k\to j}^{(\ell-1)}\big]^{-1}\}_{(f_k, j)\in \mathcal{\widetilde{B}}(f_n, i)}
  \big),
\end{split}
\end{eqnarray}
where $\mathcal{\widetilde{B}}(f_n, i)=\{(f_k, j) | j \in \B(f_n)\setminus i,  f_k\in \B(j)\setminus f_n\}$.
Observing that $\textbf{C}_{f_n\to i}^{(\ell)}$ in (\ref{CovFunc}) is independent of $\textbf{v}^{(\ell)}_{j\to f_n} $ and $\textbf{v}^{(\ell)}_{f_n\to i}$ in (\ref{BPvs2f1}) and (\ref{f2v}), we can  focus on the convergence property of $[\textbf{C}_{f_n\to i}^{(\ell)}]^{-1}$ alone.

To consider the updates of all message
information matrices, we {{introduce} the following definitions.
Let
${\textbf{C}}^{(\ell-1)}
\triangleq
\texttt{Bdiag}
(\{[\textbf{C}_{f_n\to i}^{(\ell-1)}]^{-1}\}_{n\in \mathcal{V},i\in \B(f_n)})$ be
a block diagonal  matrix with diagonal blocks
being the   message information matrices in the network at time $l-1$
with index arranged in ascending order first on $n$ and then on $i$.
Using the definition of $\textbf{C}^{(\ell-1)}$, the term $\sum_{f_k \in \mathcal B(j) \backslash f_n} [\textbf{C}_{f_k\rightarrow j}^{(\ell-1)} ]^{-1}$ in (\ref{CovFunc}) can be written as $\boldsymbol{\Xi}_{n,j} \textbf{C}^{(\ell-1)} \boldsymbol{\Xi}_{n,j}^T$, where $\boldsymbol{\Xi}_{n,j}$ is for selecting appropriate components from $\textbf{C}^{(\ell-1)}$ to form the summation.
Further, define $\textbf{H}_{n,i}=[\{ \textbf{A}_{n,j} \}_{j\in \mathcal B(f_n) \backslash i}]$,
$\boldsymbol{\Psi}_{n,i}  = \texttt{Bdiag} (\!\{\textbf{W}_j ^{-1} \}_{j\in \mathcal B(f_n) \backslash i}\!)$ and $\textbf{K}_{n,i}\!\!=\!\!\texttt{Bdiag} (\!\{ \boldsymbol{\Xi}_{n,j} \}_{j\in \mathcal B(f_n) \backslash i}\!) $, all with component blocks arranged with ascending order on $j$.  Then (\ref{CovFunc}) can be written as
\begin{equation}\label{CovFunc3}
\begin{split}
    [\textbf{C}^{(\ell)}_{f_n\rightarrow i}]^{-1}
    =&\textbf{A}_{n,i}^T\big \{\textbf{R}_n+ \textbf{H}_{n,i}[\boldsymbol \Psi_{n,i}
    + \textbf{K}_{n,i} (\textbf{I}_{|\mathcal{B}(f_n)|-1} \\ &\otimes \textbf{C}^{(\ell-1)}) \textbf{K}_{n,i}^T ]^{-1} \textbf{H}_{n,i}^T \big\} ^{-1}\textbf{A}_{n,i}.
\end{split}
\end{equation}

Now, we define the function $\mathcal{F}\triangleq\{\mathcal{F}_{1\to k}, \ldots, \mathcal{F}_{n\to i}, \ldots,\\ \mathcal{F}_{n \to M}\}$ that satisfies
${\textbf{C}}^{(\ell)} = \mathcal{F}({\textbf{C}}^{(\ell-1)}) $.
Then, by stacking $\big[\textbf{C}_{f_n\to i}^{(\ell)}\big]^{-1}$ on the left side of
(\ref{CovFunc3}) for all $n$ and $i$ as the block diagonal matrix $\textbf{C}^{(\ell)}$, we obtain
\begin{eqnarray}\label{CovFunc5}
  \textbf{C}^{(\ell)}
  &=& \textbf{A}^T \big \{ \boldsymbol{\Omega}+ \textbf{H}[\boldsymbol{\Psi} + \textbf{K} (\mathbf{I}_\varphi \otimes \textbf{C}^{(\ell-1)}) \textbf{K}^T ]^{-1} \textbf{H}^T \big\} ^{-1}\textbf{A}, \nonumber\\
   &\triangleq& \mathcal F(\textbf{C}^{(\ell-1)}),
\end{eqnarray}
where $\textbf{A}$, $\textbf{H}$,
$\boldsymbol{\Psi}$,  and $\textbf{K}$ are block diagonal matrices with block elements $\textbf{A}_{n,i}$, $\textbf{H}_{n,i}$, $\boldsymbol{\Psi}_{n,i} $, and $\textbf{K}_{n,i}$, respectively, arranged in ascending order, first on $n$ and then on $i$ (i.e., the same order as $[\textbf{C}^{(\ell)}_{f_n \rightarrow i}]^{-1}$ in $\textbf{C}^{(\ell)}$).
Furthermore,
$\varphi={\sum _{n=1} ^M |\mathcal B(f_n)|(|\mathcal B(f_n)|-1)}$
and
$\boldsymbol{\Omega}$ is a block diagonal matrix with diagonal blocks $\textbf{I} _{|B(f_n)|} \otimes \textbf{R}_n$ with ascending order on $n$.
We first present  properties of the updating operator $\mathcal{F}(\cdot)$,
with
the  proof   given in \cite{journalversion}.

\begin{myproperty} \label{P_FUN}
 The updating operator $\mathcal{F}(\cdot)$ satisfies the following properties:
\end{myproperty}

\noindent P \ref{P_FUN}.1:
$\mathcal{F}(\textbf{C}^{(\ell)}) \succeq \mathcal{F}(\textbf{C}^{(\ell-1)})$, if $\textbf{C}^{(\ell)} \succeq \textbf{C}^{(\ell-1)}\succeq \textbf{0}$.

\noindent P \ref{P_FUN}.2: $\alpha\mathcal{F}(\textbf{C}^{(\ell)}) \succ  \mathcal{F}(\alpha \textbf{C}^{(\ell)})$
and
$\mathcal{F}(\alpha^{-1}\textbf{C}^{(\ell)}) \succ  \alpha^{-1}\mathcal{F}(\textbf{C}^{(\ell)})$, if $\textbf{C}^{(\ell)} \succ \textbf{0}$ and $\alpha>1$.

\noindent P \ref{P_FUN}.3:
Define
$\textbf{U}\triangleq \textbf{A}^T  \boldsymbol{\Omega}^{-1}\textbf{A}$
and $\textbf{L}\triangleq
\textbf{A}^T \Big [  \boldsymbol{\Omega}+ \textbf{H}\boldsymbol{\Psi}^{-1} \textbf{H}^T \Big] ^{-1}\!\!\textbf{A}$.
With arbitrary $\textbf{C}^{(0)}\succeq \textbf{0}$,
$\mathcal{F}(\textbf{C}^{(\ell)})$ is bounded by
$\textbf{U} \succeq  \mathcal{F}(\textbf{C}^{(\ell)})\succeq \textbf{L}\succ \textbf{0}$ for $l\geq 1$.

In this paper, $\textbf{X} \succeq\textbf{Y}$ ($\textbf{X} \succ \textbf{Y}$) means that $\textbf{X} - \textbf{Y}$ is positive semidefinite (definite).
Based on the above properties of $\mathcal{F}(\cdot)$, we can establish the convergence property
for the information matrices.
The following theorem establishes that
there exists a unique fixed point for the mapping
$\mathcal{F}(\cdot)$.
The proof is omitted due to space restrictions; it is provided in \cite{journalversion}.
\begin{mytheorem} \label{unique}
With  $\textbf{C}^{(0)}\succeq\textbf{0}$, there exists a unique   positive definite fixed point for the mapping $\mathcal F(\cdot)$.
\end{mytheorem}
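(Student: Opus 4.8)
The plan is to recognize the recursion (\ref{CovFunc5}) as a matrix analogue of a \emph{standard interference function} in the sense of Yates, for which Properties~\ref{P_FUN}.1--\ref{P_FUN}.3 supply exactly the monotonicity, scalability, and boundedness needed to pin down a unique positive definite fixed point. I would split the argument into an existence part (using P\,\ref{P_FUN}.1 and P\,\ref{P_FUN}.3) and a uniqueness part (using P\,\ref{P_FUN}.1 and P\,\ref{P_FUN}.2).

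For existence, I would run the monotone iteration from the smallest admissible initialization. Setting $\textbf{C}^{(0)}=\textbf{0}$, Property~\ref{P_FUN}.3 gives $\textbf{C}^{(1)}=\mathcal{F}(\textbf{0})\succeq\textbf{L}\succ\textbf{0}=\textbf{C}^{(0)}$, so the first step is non-decreasing. Property~\ref{P_FUN}.1 then propagates this by induction: $\textbf{C}^{(\ell)}\succeq\textbf{C}^{(\ell-1)}$ implies $\textbf{C}^{(\ell+1)}=\mathcal{F}(\textbf{C}^{(\ell)})\succeq\mathcal{F}(\textbf{C}^{(\ell-1)})=\textbf{C}^{(\ell)}$, so the sequence is monotonically non-decreasing in the Loewner order, while Property~\ref{P_FUN}.3 bounds it above by $\textbf{U}$. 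For every fixed vector $v$ the scalar sequence $v^T\textbf{C}^{(\ell)}v$ is then bounded and non-decreasing, hence convergent; by polarization every matrix entry converges, so $\textbf{C}^{(\ell)}\to\textbf{C}^{\star}$ for some symmetric $\textbf{C}^{\star}$ with $\textbf{L}\preceq\textbf{C}^{\star}\preceq\textbf{U}$. Since $\mathcal{F}$ is a composition of congruence transformations and matrix inversions whose inverted blocks stay uniformly positive definite on this compact interval (because $\boldsymbol{\Psi}\succ\textbf{0}$ and $\boldsymbol{\Omega}\succ\textbf{0}$), it is continuous there, so passing to the limit yields $\textbf{C}^{\star}=\mathcal{F}(\textbf{C}^{\star})$, and $\textbf{C}^{\star}\succeq\textbf{L}\succ\textbf{0}$ makes the fixed point positive definite.

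For uniqueness, I would adapt Yates' scaling argument to the matrix setting. Suppose $\textbf{C}_1$ and $\textbf{C}_2$ are two positive definite fixed points and set $\alpha=\inf\{t>0:t\textbf{C}_1\succeq\textbf{C}_2\}$, the largest generalized eigenvalue of the pencil $(\textbf{C}_2,\textbf{C}_1)$; this is finite and positive because $\textbf{C}_1\succ\textbf{0}$. By construction $\alpha\textbf{C}_1\succeq\textbf{C}_2$, and minimality of $\alpha$ forces $\alpha\textbf{C}_1-\textbf{C}_2$ to be singular, so there is $v\neq\textbf{0}$ with $(\alpha\textbf{C}_1-\textbf{C}_2)v=\textbf{0}$. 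If $\alpha>1$, then
\[
\textbf{C}_2=\mathcal{F}(\textbf{C}_2)\preceq\mathcal{F}(\alpha\textbf{C}_1)\prec\alpha\mathcal{F}(\textbf{C}_1)=\alpha\textbf{C}_1,
\]
where the first inequality is Property~\ref{P_FUN}.1 applied to $\alpha\textbf{C}_1\succeq\textbf{C}_2$ and the strict middle inequality is Property~\ref{P_FUN}.2 (valid since $\textbf{C}_1\succ\textbf{0}$ and $\alpha>1$). This forces $\alpha\textbf{C}_1-\textbf{C}_2\succ\textbf{0}$, contradicting the singularity of $\alpha\textbf{C}_1-\textbf{C}_2$. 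Hence $\alpha\leq1$, i.e. $\textbf{C}_2\preceq\textbf{C}_1$; exchanging the roles of $\textbf{C}_1$ and $\textbf{C}_2$ gives the reverse inequality, so $\textbf{C}_1=\textbf{C}_2$.

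I expect the main obstacle to be the uniqueness step, specifically the correct matrix generalization of Yates' scalar ratio $\max_i p'_i/p_i$. In the scalar proof one tests a single tight coordinate, whereas here tightness is encoded in a generalized eigenvector $v$, and the crux is that the \emph{strict} operator inequality in Property~\ref{P_FUN}.2 upgrades $\alpha\textbf{C}_1\succeq\textbf{C}_2$ to the strict relation $\alpha\textbf{C}_1\succ\textbf{C}_2$, which is exactly what contradicts the singularity enforced by minimality of $\alpha$. The two places I would be most careful are verifying that the infimum defining $\alpha$ is attained with a genuinely singular (not merely positive semidefinite) difference, and confirming that the continuity of $\mathcal{F}$ used in the existence step is uniform enough to carry the fixed-point relation through the limit.
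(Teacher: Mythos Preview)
The paper does not actually prove Theorem~\ref{unique}; it explicitly states that ``the proof is omitted due to space restrictions; it is provided in [journalversion].'' So there is no in-paper proof to compare against directly.

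That said, your argument is correct and is precisely in the spirit the paper is pointing toward: Properties~\ref{P_FUN}.1--\ref{P_FUN}.3 are set up exactly as the matrix analogues of Yates' monotonicity, scalability, and boundedness, and your existence-by-monotone-iteration plus uniqueness-by-scaling is the standard way to exploit them. Two supporting observations: (i) your continuity claim for $\mathcal{F}$ is sound because the inner inverted block $\boldsymbol{\Psi}+\textbf{K}(\mathbf{I}_\varphi\otimes\textbf{C})\textbf{K}^T\succeq\boldsymbol{\Psi}\succ\textbf{0}$ and the outer inverted block is $\succeq\boldsymbol{\Omega}\succ\textbf{0}$ uniformly on $\textbf{L}\preceq\textbf{C}\preceq\textbf{U}$, so the limit passage is legitimate; (ii) the minimality of $\alpha$ does force $\alpha\textbf{C}_1-\textbf{C}_2$ to be genuinely singular (it is the largest eigenvalue of $\textbf{C}_1^{-1/2}\textbf{C}_2\textbf{C}_1^{-1/2}$, attained with a corresponding eigenvector), so the strict inequality from P~\ref{P_FUN}.2 yields the desired contradiction. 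Your two flagged worries are therefore both resolvable.

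As indirect evidence that this is also the intended route, note that the paper's own proof of Theorem~\ref{guarantee} uses exactly the same ingredients---a monotone sandwich built from $\alpha\textbf{C}^\ast$ and $\textbf{L}$, together with P~\ref{P_FUN}.1 and P~\ref{P_FUN}.2---so your proof of Theorem~\ref{unique} dovetails with the surrounding material.
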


Lemma \ref{pdlemma} states that with arbitrary positive
semidefinite (p.s.d.) initial message information matrices, the message
information matrices will be kept as positive
definite (p.d.) at every iteration.
On the other hand, Theorem \ref{unique} indicates that there exists a unique fixed point for the mapping $\mathcal F$.
Next, we will show that with arbitrary initial value $\textbf{C}^{(0)}\succeq 0$, $\textbf{C}^{(\ell)}$ converges to
a unique  p.d. matrix.
\begin{mytheorem} \label{guarantee}
The matrix sequence
$\{\textbf{C}^{(\ell)}\}_{l=0,1,\ldots}$ defined by~(\ref{CovFunc5}) converges to a unique positive definite matrix
for any initial covariance matrix $\textbf{C}^{(0)}\succeq \mathbf 0$.
\end{mytheorem}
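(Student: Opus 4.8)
The plan is to prove convergence by squeezing the arbitrary sequence between two monotone auxiliary sequences that both converge to the unique fixed point guaranteed by Theorem \ref{unique}. First I would invoke P \ref{P_FUN}.3 to note that after a single application of $\mathcal{F}$ every iterate lands in the order interval $[\textbf{L},\textbf{U}]$; in particular $\textbf{L}\preceq\textbf{C}^{(1)}\preceq\textbf{U}$ for any $\textbf{C}^{(0)}\succeq\mathbf 0$. This reduces the problem to studying the dynamics started from matrices inside $[\textbf{L},\textbf{U}]$.

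Next I would introduce the lower and upper sequences $\underline{\textbf{C}}^{(\ell)}=\mathcal{F}^{\ell}(\textbf{L})$ and $\overline{\textbf{C}}^{(\ell)}=\mathcal{F}^{\ell}(\textbf{U})$. Since P \ref{P_FUN}.3 gives $\mathcal{F}(\textbf{L})\succeq\textbf{L}$ and $\mathcal{F}(\textbf{U})\preceq\textbf{U}$, the monotonicity property P \ref{P_FUN}.1 shows by induction that $\{\underline{\textbf{C}}^{(\ell)}\}$ is nondecreasing and $\{\overline{\textbf{C}}^{(\ell)}\}$ is nonincreasing in the Loewner order, with both confined to $[\textbf{L},\textbf{U}]$. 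A bounded monotone sequence of symmetric matrices converges in the Loewner order: each quadratic form $\textbf{v}^{T}\underline{\textbf{C}}^{(\ell)}\textbf{v}$ is a bounded monotone scalar sequence, and polarization then recovers convergence of every entry, so both sequences have limits $\underline{\textbf{C}}^{\infty}$ and $\overline{\textbf{C}}^{\infty}$ lying in the closed set $[\textbf{L},\textbf{U}]$.

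Then I would verify that $\mathcal{F}$ is continuous on $[\textbf{L},\textbf{U}]$: the matrix inverses appearing in (\ref{CovFunc5}) are well defined and continuous there because $\textbf{L}\succ\mathbf 0$ keeps every argument bounded away from singularity. Passing to the limit in $\underline{\textbf{C}}^{(\ell+1)}=\mathcal{F}(\underline{\textbf{C}}^{(\ell)})$ (and likewise for the upper sequence) shows that $\underline{\textbf{C}}^{\infty}$ and $\overline{\textbf{C}}^{\infty}$ are both fixed points of $\mathcal{F}$. Invoking the uniqueness established in Theorem \ref{unique}, these two limits coincide with the unique positive definite fixed point $\textbf{C}^{\star}$.

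Finally, for an arbitrary $\textbf{C}^{(0)}\succeq\mathbf 0$, the bound $\textbf{L}\preceq\textbf{C}^{(1)}\preceq\textbf{U}$ combined with monotonicity yields $\underline{\textbf{C}}^{(\ell)}\preceq\textbf{C}^{(\ell+1)}\preceq\overline{\textbf{C}}^{(\ell)}$ for all $\ell$, so the squeeze forces $\textbf{C}^{(\ell)}\to\textbf{C}^{\star}$, proving the claim. The step I expect to be the main obstacle is the matrix monotone convergence argument: unlike the scalar case, one must argue carefully that Loewner-monotonicity together with order-boundedness implies entrywise convergence to a symmetric limit inside the cone, and that $\mathcal{F}$ is genuinely continuous on $[\textbf{L},\textbf{U}]$. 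I note that the scaling property P \ref{P_FUN}.2, while not needed for bare convergence, is precisely what would upgrade this result to the exponential rate promised in the abstract, by rendering $\mathcal{F}$ a contraction in a suitable (e.g.\ Thompson) metric.
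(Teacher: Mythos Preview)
Your proposal is correct and follows the same overall skeleton as the paper's proof---a monotone squeeze between two auxiliary sequences that converge to the unique fixed point $\textbf{C}^{\ast}$---but differs in one noteworthy respect. The paper takes the \emph{lower} sequence $\mathcal{F}^{\ell}(\textbf{L})$ exactly as you do, but for the \emph{upper} sequence it does not iterate from $\textbf{U}$; instead it picks $\alpha>1$ with $\alpha\textbf{C}^{\ast}\succeq\textbf{C}^{(1)}$ and iterates from $\alpha\textbf{C}^{\ast}$, invoking the scaling property P~\ref{P_FUN}.2 to obtain $\alpha\textbf{C}^{\ast}=\alpha\mathcal{F}(\textbf{C}^{\ast})\succ\mathcal{F}(\alpha\textbf{C}^{\ast})$ and hence a strictly decreasing sequence. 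Your choice of $\textbf{U}$ as the upper anchor is slightly more economical: since P~\ref{P_FUN}.3 already gives $\mathcal{F}(\textbf{U})\preceq\textbf{U}$, monotonicity alone (P~\ref{P_FUN}.1) suffices, and P~\ref{P_FUN}.2 is not touched---confirming your closing remark that the scaling property is needed only for the geometric rate in Theorem~\ref{RateCov}, not for bare convergence. Conversely, the paper's choice has the minor advantage that strict decrease is immediate, whereas your route yields only $\succeq$. You are also more careful than the paper on two points it leaves implicit: the argument that a Loewner-monotone, order-bounded sequence of symmetric matrices converges entrywise, and the continuity of $\mathcal{F}$ on $[\textbf{L},\textbf{U}]$ needed to conclude that the limit is a fixed point. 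Both are routine, and your sketch of them (via quadratic forms plus polarization, and boundedness away from singularity) is adequate.
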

\begin{proof}
With arbitrary initial value $\textbf{C}^{(0)}\succeq \textbf 0$, following P \ref{P_FUN}.3, we have
$\textbf{U} \succeq  \textbf{C}^{(1)}
\succeq \textbf{L}\succ \textbf{0}$.
On the other hand, according to Theorem \ref{unique},  (\ref{CovFunc5}) has a unique fixed point
$\textbf{C}^{\ast}\succ \textbf{0} $.
Notice that we can always choose a scalar
$\alpha > 1$ such that
\begin{equation}\label{inequality1}
 \alpha \textbf{C}^{\ast}
\succeq
\textbf{C}^{(1)}
\succeq
 \textbf{L}.
\end{equation}
Applying  $\mathcal F(\cdot)$ to (\ref{inequality1})
$l$ times, and using P \ref{P_FUN}.1, we have
\begin{equation}\label{inequality2}
\mathcal F^{l}(\alpha \textbf{C}^{\ast})
\succeq
\mathcal F^{l+1}(\textbf{C}^{(0)})
\succeq
\mathcal F^{l}( \textbf{L}),
\end{equation}
where
$\mathcal F^{l}(\textbf{X})$ denotes applying
$\mathcal F$ on $\textbf{X}$ for $l$ times.


We start from the left inequality in (\ref{inequality2}).
Following the fixed point definition, $\alpha \textbf C^{\ast}=\alpha \mathcal F(\textbf C^{\ast})$.
Then, according to
P \ref{P_FUN}.2, $\alpha \textbf C^{\ast} \succ \mathcal F(\alpha \textbf C^{\ast})$.
Applying $\mathcal F$ again gives $\mathcal F(\alpha \textbf C^{\ast}) \succ \mathcal F^2(\alpha \textbf C^{\ast})$.
Applying $\mathcal F(\cdot)$ repeatedly, we can obtain $\mathcal F^2(\alpha \textbf{C}^{\ast})\\
\succ \mathcal F^3(\alpha \textbf{C}^{\ast})\succ \mathcal F^4(\alpha \textbf{C}^{\ast})$, etc.
 Thus $\mathcal F^l (\alpha \textbf C^{\ast})$ is a decreasing sequence with respect to the partial order induced by the cone of p.s.d. matrices as $l$ increases.
Furthermore, since $\mathcal F(\cdot)$ is bounded below by $\textbf L$, $\mathcal F^l(\alpha  \textbf C^{\ast})$ is convergent.
Finally, since there exists only one fixed point for $\mathcal F(\cdot)$, $\lim_{l\to \infty} \mathcal F^l (\alpha \textbf C^{\ast}) = \textbf  C^{\ast}$.
On the other hand, for the right hand side of (\ref{inequality2}),
as $\mathcal F(\cdot)\succeq \textbf L$,
we have $\mathcal F(\textbf L)\succeq \textbf L$.
Applying $\mathcal F$ repeatedly gives $\mathcal F^2(\textbf L) \succeq \mathcal F(\textbf L)$,
$\mathcal F^3(\textbf L) \succeq \mathcal F^2(\textbf L)$, etc.
So, $\mathcal F^l (\textbf L)$ is an increasing sequence (with respect to the partial order induced by the cone of p.s.d. matrices).
Since $\mathcal F(\cdot)$ is upper bounded by $\textbf U$, $\mathcal F^l(\textbf L)$ is a convergent sequence.
Again due to the unique fixed point, we have $ \lim_{l\to \infty} \mathcal F^l(\textbf L) =\textbf C^{\ast}$.
Finally, taking the limit with respect to $l$ on (\ref{inequality2}) we have
$\lim_{l\to \infty} \mathcal F^{l}(\textbf{C}^{(0)}) = \textbf{C}^{\ast},
$
for arbitrary initial $\textbf{C}^{(0)}\succeq \mathbf 0$.
\end{proof}

According to  Theorem \ref{guarantee}, the covariance matrix $\textbf{C}^{(\ell)}_{f_n\to i}$ converges    if all initial information matrices are
 p.s.d., i.e.,  $\big[\textbf{C}^{(0)}_{f_n\to i}\big]^{-1}\succeq \textbf{0}$
 for all $i \in \mathcal V$ and $f_n \in \mathcal B(i)$.
{Notice that, for the pairwise model, the information matrix does not necessarily converge for all initial non-negative value (in the scalar variable case) as shown in \cite{WalkSum1,minsum09}.}
Moreover, due to the computation of $[\textbf{C}^{(\ell)}_{f_n\to i}]^{-1}$  being independent of the local observations $\textbf{y}_n$,
as long as the network topology does not change, the converged value  $[\textbf{C}^{\ast}_{f_n\to i}]^{-1}$ can be precomputed offline and stored at each node, and there is no need to re-compute $[\textbf{C}^{\ast}_{f_n\to i}]^{-1}$ even if $\textbf{y}_n$ varies.

 Another fundamental question is how fast the convergence is, and this is the focus of the discussion below.
 Since the convergence of  a dynamic system is often studied with the part metric \cite{PartBook},
in the following, we
start by introducing the part metric.

\begin{mydef}\label{mydef}
Part (Birkhoff) Metric\cite{PartBook}:
For arbitrary matrices $\textbf{X}$ and $\textbf{Y}$ with the same dimension,
if there exists
$\alpha\geq 1$ such that $\alpha\textbf{X} \succeq \textbf{Y} \succeq \alpha^{-1} \textbf{X} $,
$\textbf{X} $ and $\textbf{Y}$
 are called the parts,
and
$ \mathrm{d} (\textbf{X}, \textbf{Y})\triangleq
\inf \{\log \alpha: \alpha\textbf{X} \succeq \textbf{Y}\succeq \alpha^{-1} \textbf{X}, \alpha \geq 1\}$
defines a metric  called the part metric.
\end{mydef}

Next, we will show that
$\{\textbf{C}^{(\ell)}\}_{l=1,..}$
converges at a geometric rate  with respect to the part metric in $\mathcal C$, which is constructed as
$$\mathcal C =\{\textbf{C}^{(\ell)}|  \textbf{U} \succeq \textbf{C}^{(\ell)} \succeq \textbf{C}^{\ast}+ \epsilon \textbf{I}\}
\cup
\{\textbf{C}^{(\ell)}|
\textbf{C}^{\ast}- \epsilon \textbf{I} \succeq \textbf{C}^{(\ell)} \succeq \textbf{L}\},$$
where  $\epsilon>0 $ is a scalar and can be arbitrarily small.
\begin{mytheorem}\label{RateCov}
With the initial covariance matrix set to be an arbitrary p.s.d. matrix, i.e., $[\textbf{C}^{(0)}_{f_n\to i}]^{-1}\succeq \textbf{0}$,
the sequence $\{\textbf{C}^{(\ell)}\}_{l=0,1,\ldots}$ converges at a geometric rate  with respect to the part metric in $\mathcal C$.
\end{mytheorem}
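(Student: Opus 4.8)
The plan is to show that the update $\mathcal{F}(\cdot)$ is a strict contraction in the part metric on the set $\mathcal{C}$, and then transfer this to geometric decay of $\mathrm{d}(\textbf{C}^{(\ell)}, \textbf{C}^{\ast})$. First I would verify that $\mathcal{F}(\cdot)$ is non-expansive: if $\mathrm{d}(\textbf{X}, \textbf{Y}) = \log\alpha$, so that $\alpha\textbf{X} \succeq \textbf{Y} \succeq \alpha^{-1}\textbf{X}$ with $\alpha \geq 1$ and $\textbf{X}, \textbf{Y} \succ \textbf{0}$, then monotonicity (P\,\ref{P_FUN}.1) gives $\mathcal{F}(\alpha\textbf{X}) \succeq \mathcal{F}(\textbf{Y}) \succeq \mathcal{F}(\alpha^{-1}\textbf{X})$, while the strict subhomogeneity (P\,\ref{P_FUN}.2) yields $\alpha\mathcal{F}(\textbf{X}) \succ \mathcal{F}(\alpha\textbf{X})$ and $\mathcal{F}(\alpha^{-1}\textbf{X}) \succ \alpha^{-1}\mathcal{F}(\textbf{X})$. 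Chaining these inequalities produces $\alpha\mathcal{F}(\textbf{X}) \succ \mathcal{F}(\textbf{Y}) \succ \alpha^{-1}\mathcal{F}(\textbf{X})$, so $\mathrm{d}(\mathcal{F}(\textbf{X}), \mathcal{F}(\textbf{Y})) \leq \log\alpha = \mathrm{d}(\textbf{X}, \textbf{Y})$, and the inequalities are in fact strict whenever $\alpha > 1$.

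Next I would exploit this strictness at the fixed point. Taking $\textbf{Y} = \textbf{C}^{\ast}$ and recalling $\mathcal{F}(\textbf{C}^{\ast}) = \textbf{C}^{\ast}$ from Theorem\,\ref{unique}, the chain above gives, for any $\textbf{C} \neq \textbf{C}^{\ast}$ with $\mathrm{d}(\textbf{C}, \textbf{C}^{\ast}) = \log\alpha > 0$, the strict sandwich $\alpha\textbf{C}^{\ast} \succ \mathcal{F}(\textbf{C}) \succ \alpha^{-1}\textbf{C}^{\ast}$. Because these gaps are positive definite and $\textbf{C}^{\ast} \succ \textbf{0}$, some $\beta < \alpha$ still satisfies $\beta\textbf{C}^{\ast} \succeq \mathcal{F}(\textbf{C}) \succeq \beta^{-1}\textbf{C}^{\ast}$, so $\mathrm{d}(\mathcal{F}(\textbf{C}), \textbf{C}^{\ast}) < \mathrm{d}(\textbf{C}, \textbf{C}^{\ast})$ pointwise. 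To upgrade this to a uniform rate I would use that $\mathcal{C}$ is compact: it is closed and, by P\,\ref{P_FUN}.3, contained in the order interval $[\textbf{L}, \textbf{U}]$ of the p.s.d.\ cone. On $\mathcal{C}$ every matrix $\textbf{C}$, its image $\mathcal{F}(\textbf{C})$, and $\textbf{C}^{\ast}$ lie in the interior of the cone (all $\succeq \textbf{L} \succ \textbf{0}$), so the part metric and $\mathcal{F}$ are continuous there; moreover the $\epsilon\textbf{I}$ margin in the definition of $\mathcal{C}$ bounds $\mathrm{d}(\textbf{C}, \textbf{C}^{\ast})$ away from zero. Hence the ratio $\textbf{C} \mapsto \mathrm{d}(\mathcal{F}(\textbf{C}), \textbf{C}^{\ast})/\mathrm{d}(\textbf{C}, \textbf{C}^{\ast})$ is well defined and continuous on the compact set $\mathcal{C}$, attains its maximum $\rho$, and by the pointwise strict contraction $\rho < 1$.

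Finally I would transfer the contraction to the actual iterates through the two monotone bracketing sequences used in Theorem\,\ref{guarantee}: the decreasing orbit $\mathcal{F}^{l}(\alpha\textbf{C}^{\ast}) \downarrow \textbf{C}^{\ast}$, which lies in the first component of $\mathcal{C}$, and the increasing orbit $\mathcal{F}^{l}(\textbf{L}) \uparrow \textbf{C}^{\ast}$, which lies in the second, these together sandwiching $\textbf{C}^{(\ell)}$ by (\ref{inequality2}). Applying the uniform bound $\mathrm{d}(\mathcal{F}(\textbf{C}), \textbf{C}^{\ast}) \leq \rho\,\mathrm{d}(\textbf{C}, \textbf{C}^{\ast})$ along each bracketing orbit (for as long as it stays in $\mathcal{C}$, i.e.\ until it enters the $\epsilon$-ball) yields $\mathrm{d}(\mathcal{F}^{l}(\alpha\textbf{C}^{\ast}), \textbf{C}^{\ast}) \leq \rho^{l}\,\mathrm{d}(\alpha\textbf{C}^{\ast}, \textbf{C}^{\ast})$ and the analogous bound for $\mathcal{F}^{l}(\textbf{L})$. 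Since the brackets then give $\beta^{-1}\textbf{C}^{\ast} \preceq \mathcal{F}^{\ell-1}(\textbf{L}) \preceq \textbf{C}^{(\ell)} \preceq \mathcal{F}^{\ell-1}(\alpha\textbf{C}^{\ast}) \preceq \gamma\textbf{C}^{\ast}$ with $\log\beta, \log\gamma = O(\rho^{\ell})$, the same geometric bound propagates to $\mathrm{d}(\textbf{C}^{(\ell)}, \textbf{C}^{\ast})$. The main obstacle is the uniformization step: P\,\ref{P_FUN}.2 supplies only strict, non-quantitative subhomogeneity, so obtaining a single rate $\rho < 1$ hinges on combining that strictness with the compactness of $\mathcal{C}$ and the continuity of both $\mathcal{F}$ and the part metric on the interior of the p.s.d.\ cone, which is precisely why $\mathcal{C}$ is defined to exclude the $\epsilon$-neighborhood of the fixed point.
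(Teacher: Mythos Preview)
Your proposal is correct and follows essentially the same route as the paper: derive a strict pointwise contraction $\mathrm{d}(\mathcal{F}(\textbf{C}),\textbf{C}^{\ast})<\mathrm{d}(\textbf{C},\textbf{C}^{\ast})$ from monotonicity (P\,\ref{P_FUN}.1) plus strict subhomogeneity (P\,\ref{P_FUN}.2), then pass to a uniform rate by taking the supremum of the ratio over $\mathcal{C}$.

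Where you differ is in rigor rather than strategy. The paper simply asserts that $c=\sup_{\textbf{C}\in\mathcal{C}}\mathrm{d}(\mathcal{F}(\textbf{C}),\textbf{C}^{\ast})/\mathrm{d}(\textbf{C},\textbf{C}^{\ast})<1$ and then writes $\mathrm{d}(\textbf{C}^{(\ell)},\textbf{C}^{\ast})<c^{\ell}\,\mathrm{d}(\textbf{C}^{(0)},\textbf{C}^{\ast})$ directly, without justifying either that the supremum is attained or that the actual iterates $\textbf{C}^{(\ell)}$ remain in $\mathcal{C}$. You fill both gaps: you argue $c<1$ via compactness of $\mathcal{C}\subset[\textbf{L},\textbf{U}]$ and continuity of $\mathcal{F}$ and the part metric on the cone interior (this is exactly why the $\epsilon\textbf{I}$ margin is built into $\mathcal{C}$), and you route the geometric bound through the monotone bracketing orbits $\mathcal{F}^{l}(\alpha\textbf{C}^{\ast})$ and $\mathcal{F}^{l}(\textbf{L})$ from Theorem~\ref{guarantee}, which \emph{do} lie in the two components of $\mathcal{C}$ until they enter the $\epsilon$-neighborhood. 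This last step is a genuine addition, since a generic iterate $\textbf{C}^{(\ell)}$ need not be comparable to $\textbf{C}^{\ast}\pm\epsilon\textbf{I}$ and hence need not belong to $\mathcal{C}$ at all. Your version therefore proves the same statement by the same mechanism, but with the two soft spots in the paper's argument made explicit.
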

\begin{proof}
{Consider two matrices $\textbf{C}^{(\ell)} \in \mathcal C$, and $\textbf{C}^{\ast} \not\in \mathcal C$,
according to Definition \ref{mydef},
we have
$ \mathrm{d} (\textbf{C}^{(\ell)}, \textbf{C}^{\ast})\triangleq
\inf \{\log\alpha: \alpha\textbf{C}^{(\ell)} \succeq \textbf{C}^{\ast}\succeq \alpha^{-1} \textbf{C}^{(\ell)}\}$.
Since $\mathrm{d} (\textbf{C}^{(\ell)}, \textbf{C}^{\ast})$ is the smallest number satisfying $\alpha \textbf{C}^{(\ell)} \succeq \textbf{C}^{\ast} \succeq \alpha^{-1} \textbf{C}^{(\ell)}$, this is equivalent to
\begin{equation}\label{33}
\exp\{\mathrm{d} (\textbf{C}^{(\ell)}, \textbf{C}^{\ast})\}\textbf{C}^{(\ell)}
\succeq
\textbf{C}^{\ast}
\succeq
\exp\{-\mathrm{d} (\textbf{C}^{(\ell)}, \textbf{C}^{\ast})\}
\textbf{C}^{(\ell)}.
\end{equation}
Applying P \ref{P_FUN}.1 to (\ref{33}), we have
$\exp\{\mathrm{d} (\textbf{C}^{(\ell)}, \textbf{C}^{\ast})\}
\mathcal F(\textbf{C}^{(\ell)}
\succeq
\mathcal F(\textbf{C}^{\ast})
\succeq
\exp\{-\mathrm{d} (\textbf{C}^{(\ell)}, \textbf{C}^{\ast})\}
\mathcal F(\textbf{C}^{(\ell)})
$.
Then applying P \ref{P_FUN}.2 and
considering  that   $\exp\{\mathrm{d} (\textbf{C}^{(\ell)}, \textbf{C}^{\ast})\}>1$ and
$\exp\{-\mathrm{d} (\textbf{C}^{(\ell)}, \textbf{C}^{\ast})\}<1$, we obtain
\begin{equation}
\begin{split}
&\exp\{\mathrm{d} (\textbf{C}^{(\ell)}, \textbf{C}^{\ast})\}
\mathcal F(\textbf{C}^{(\ell)})
\\
&\succ
\mathcal F(\textbf{C}^{\ast})
\succ
\exp\{-\mathrm{d} (\textbf{C}^{(\ell)}, \textbf{C}^{\ast})\}
\mathcal F(\textbf{C}^{(\ell)}).
\end{split}
\end{equation}
Notice that, for arbitrary p.d. matrices $\textbf{X}$ and $\textbf{Y}$, if $\textbf{X}-k\textbf{Y}\succ \textbf{0}$ then, by definition that, we have $\textbf{x}^T\textbf{X}\textbf{x}
-k\textbf{x}^T\textbf{Y}\textbf{x}
> {0}$.
Then there must exist $o>0$ that is small enough such that
$\textbf{x}^T\textbf{X}\textbf{x}
-(k+o)
\textbf{x}^T\textbf{Y}\textbf{x}
> {0}$
or equivalently
$\textbf{X}
\succ (k+o)
\textbf{Y}$.
Thus, as $\exp{(\cdot)}$ is a continuous function, there must exist some $\triangle\mathrm{d}>0$ such that
\begin{equation}\label{35}
\begin{split}
&\exp\{-\triangle\mathrm{d}+\mathrm{d} (\textbf{C}^{(\ell)}, \textbf{C}^{\ast})\}
\mathcal F(\textbf{C}^{(\ell)})\\
&\succ
\mathcal F(\textbf{C}^{\ast})
\succ
\exp\{\triangle\mathrm{d}-
\mathrm{d} (\textbf{C}^{(\ell)}, \textbf{C}^{\ast})\}
\mathcal F(\textbf{C}^{(\ell)}).
\end{split}
\end{equation}
Now, using  the definition of part metric, (\ref{35}) is equivalent to
\begin{equation}
-\triangle\mathrm{d}+\mathrm{d}
(\textbf{C}^{(\ell)}, \textbf{C}^{\ast})
\geq
\mathrm{d} (\mathcal F(\textbf{C}^{(\ell)}), \mathcal F(\textbf{C}^{\ast})).
\end{equation}
Hence, we obtain
$\mathrm{d} (\mathcal F(\textbf{C}^{(\ell)}), \mathcal F(\textbf{C}^{\ast}))
<
\mathrm{d}
(\textbf{C}^{(\ell)}, \textbf{C}^{\ast})$.
}
This~result holds for any $\textbf{C}^{(\ell)} \in \mathcal C$,
$\mathrm{d} (\mathcal{F}(\textbf{C}^{(\ell)}), \mathcal{F}(\textbf{C}^{\ast})) <
c
\mathrm{d} (\textbf{C}^{(\ell)},\textbf{C}^{\ast}) $,
where $c=\sup_{\textbf{C}^{l}\in \mathcal{C}} \frac{\mathrm{d}(\mathcal{F}(\textbf{C}^{l}) , \mathcal{F}(\textbf{C}^{\ast})) }{\mathrm{d} (\textbf{C}^{l},\textbf{C}^{\ast})}<1$.
Consequently,
we have
$\mathrm{d} (\textbf{C}^{(\ell)}, \textbf{C}^{\ast}) <
c^{l}
\mathrm{d} (\textbf{C}^{(0)}, \textbf{C}^{\ast}).
$
Thus the sequence $\{\textbf{C}^{(\ell)}\}_{l=1,\ldots}$ converges
at a geometric rate  with respect to the part metric.
\end{proof}

It is useful to have an estimate of the convergence rate of $\textbf{C}^{(\ell)}$ in terms of the more standard induced
matrix norms.
According to  \cite[Lemma 2.3]{matrixnorm},
the convergence rate of $||\textbf{C}^{(0)}- \textbf{C}^{\ast}||$ is dominated by that of $\mathrm{d} (\textbf{C}^{(0)}, \textbf{C}^{\ast})$,
where $||\cdot||$ is a monotone norm defined on the p.s.d. cone, with
$||\cdot||_2$
and $|| \cdot ||_F$ being examples of such matrix norms \cite[2.2-10]{normbook}.
More specifically,
\begin{equation}
\begin{split}
&(2\exp\{\mathrm{d} (\textbf{C}^{(\ell)}, \textbf{C}^{\ast})\} - \exp\{-\mathrm{d} (\textbf{C}^{(\ell)}, \textbf{C}^{\ast})\} - 1)\\
&\quad \quad \quad \times
\min\{||\textbf{C}^{(\ell)}||, ||\textbf{C}^{\ast}||\}
\geq
||\textbf{C}^{(\ell)} - \textbf{C}^{\ast}||.
\end{split}
\end{equation}

The physical meaning of Theorem \ref{RateCov} is that the sequence $\{\textbf{C}^{(\ell)}\}_{l=1,...}$ converges at a geometric rate (the distance between $\textbf{C}^{(\ell)}$ and $\textbf{C}^{\ast}$ decreases exponentially) before $\textbf{C}^{(\ell)}$ enters $\textbf{C}^{\ast}$'s neighborhood, which can be chosen arbitrarily small.
\\[-8mm]

\section{Conclusion}\label{conclusion}
This paper has established the convergence of the exchanged message information matrix of Gaussian belief propagation (BP) for distributed estimation.
We have shown  analytically that, with arbitrary positive semidefinite initial value, the information matrix converges to a unique positive definite matrix at geometric rate.
The convergence guaranteed property and fast convergence rate of the message information matrix pave the way for the convergence analysis of the Gaussian BP message mean vector.

\vfill\pagebreak


\end{document}